\newtheorem{proposition}{Proposition}
\newtheorem{lemma}{Lemma}
\theoremstyle{definition}
\newtheorem{remark}{Remark}
\newtheorem{assumption}{Assumptions}
\newtheorem{definition}{Definition}
\newcommand\T{{\hspace{-0pt}\intercal}}
\DeclareMathOperator{\sgn}{sgn}
\DeclareMathOperator{\erf}{erf}
\begin{document}

\title{
Sliding Mode Control for 3-D Uncalibrated and Constrained Vision-based Shape Servoing within Input Saturation
}

\author{
Fangqing Chen $^*$\\
University of Toronto

\thanks{
Copyright may be transferred without notice, after which this version
may no longer be accessible.
}
\thanks{
$^*$ Corresponding Author.
}

}

\bstctlcite{IEEEexample:BSTcontrol}

\maketitle

\begin{abstract}
This paper designs a servo control system based on sliding mode control for the shape control of elastic objects.
In order to solve the effect of non-smooth and asymmetric control saturation, a Gaussian-based continuous differentiable asymmetric saturation function is used for this goal.
The proposed detection approach runs in a highly real-time manner.
Meanwhile, this paper uses sliding mode control to prove that the estimation stability of the deformation Jacobian matrix and the system stability of the controller are combined, which verifies the control stability of the closed-loop system including estimation.
Besides, an integral sliding mode function is designed to avoid the need for second-order derivatives of variables, which enhances the robustness of the system in actual situations.
Finally, the Lyapunov theory is used to prove the consistent final boundedness of all variables of the system.
\end{abstract}

\begin{IEEEkeywords}
Robotics, 
Shape-servoing, 
Asymmetric saturation, 
Sliding Mode Control,
Deformable objects 
\end{IEEEkeywords}

\IEEEpeerreviewmaketitle

\section{Introduction}\label{section1}
The deformable object manipulation (DOM) receives considerable attention in the field of robotics.
Also, this application can be seen everywhere, such as:
industrial processing \cite{grutzner2007development},
medical surgery \cite{gerhardus2003robot}, 
furniture services \cite{soderlund2021robot}, and 
item package \cite{dubey2006packaging}. 
However, although a lot of research has been done on DOM, a complete manipulation framework system has not yet been formed \cite{ma2022active}. 
The biggest difficulty in this is the complex and unknown physical characteristics of the deformable linear object (DLO) is hard to obtain in the real application environment \cite{zou2022deep,zheng2016path}.
Currently, methods targeting DOM are broadly divided into learning-based and cybernetics-based.
This article designs the DOM from a control perspective.

To the best of our knowledge, this is the first attempt to design an SMC-based manipulation framework for DLO with the consideration of the non-symmetric saturation control issue. \cite{yu2018t,sheng2017adaptive,qi2021contour}, which helps to obtain more explainable contents in the physical applications \cite{sanchez2020blind}.

The key contributions of this paper are three-fold:
\begin{itemize}
\item 
\textbf{Construction of the Unified Shape Manipulation:}
This paper forms a unified manipulation framework from the control-based viewpoint.
The core modules are constructed in three parts, detection/extraction, approximation, and shape servoing control.
The proposed manipulation runs in a model-free manner and does not need any prior knowledge of the system model.

\item 
\textbf{Consideration of the Non-symmetric Saturation:}
This paper considers the common asymmetric and non-smooth control input saturation problems in practical applications and avoids the control input discontinuity problem caused by the traditional use of hard saturation measures by introducing a Gaussian saturation function.
\end{itemize}

\section{Related Work}
\subsection{Input Saturation}
In the real application environment, the input saturation issue that occurred in the control should be paid attention to improve the system performance \cite{tarbouriech2009anti,mohammad2008robust}.
About the solutions of nonlinear saturation, much research has been conducted in recent years, and is well addressed in \cite{qi2018trajectory,zhang2014neural}.
About the detailed survey of the control input saturation, we refer the readers to \cite{fallah2018observer,gao2021chebyshev}.

Input saturation of plants with uncertain models has been considered in developing anti-windup schemes in \cite{visioli2006anti,schiavo2022individualized}.
Some state-of-art methods are introduced in \cite{lee2015study}, and a unified framework incorporating the various existing anti-windup schemes is presented in \cite{schiavo2021optimized}. 
Model predictive control (MPC) plays the most important role in this question, output, or state constraint \cite{rodrigues2005robust}. 
However, the limitation of MPC is also obvious, it needs many online iterations, so it is not good for the specified tasks that need high real-time performance \cite{galuppini2018model}.
MPC must be completed between the two sampling instances, which increases the calculation burden for real-time control \cite{shi2014sampled}.
Furthermore, it should be noted that one critical assumption made in the most of the above research is that the actuator saturation is symmetric \cite{dubey2006packaging,hu2000null}.
The symmetry of the saturation function to a large degree simplifies the analysis of the closed-loop system \cite{yuan2014switching}. 

In this brief, we will investigate a sliding model control-based manipulation framework with also considers the nonlinear asymmetric saturation issue.

\section{PROBLEM FORMULATION}\label{section2}
\emph{Notation:}
In this paper, we use the following frequently-used notation:
Bold small letters, e.g., $\mathbf{m}$ denote column vectors, while bold capital letters, e.g., $\mathbf{M}$ denote matrices.

\subsection{Robot-Manipulation Model}
Consider the kinematic-controlled 6-DOF robot manipulator (i.e., the underlying controller can precisely execute the given speed command), $\mathbf{q} \in \mathbb{R}^{6}$ and $\mathbf{r} = \mathbf{f}_r(\mathbf{q}) \in \mathbb{R}^6$ are denoted by the robot's joint angles and end-effector's pose, respectively.
$\mathbf{f}_r(\mathbf{q})$ is the forward kinematics of the robot. 
Therefore, the standard velocity Jacobian matrix $\mathbf{J}_r$ of the manipulator can be obtained as follows: $\dot{\mathbf{r}} = \mathbf{J}_r(\mathbf{q}) \dot{\mathbf{q}}$ where $\mathbf{J}_r(\mathbf{q}) \in \mathbb{R}^{6 \times 6}$ is assumed to be exactly known.
In addition, we assume that the robot does not collide with the environment or itself during the manipulation process, i.e., collision avoidance is not the scope of the article \cite{qi2022model}.

\subsection{Visual-Deformation Model}
In this article, the centerline configuration of the object is defined as follows:
\begin{align}
\label{eq31}
\bar{\mathbf{c}}=
[\mathbf{c}_1^\T,\ldots,\mathbf{c}_N^\T]^\T \in \mathbb{R}^{2N}, \ \ \mathbf{c}_i=[c_{xi},c_{yi}]^\T \in \mathbb{R}^2
\end{align}
where $N$ is the number of points comprising the centerline, $\mathbf{c}_i$ for $i=1,\ldots,N$ are the pixel coordinates of $i$-th point represented in the camera frame.
In this article, the object is assumed to be tightly attached to the end-effector beforehand without sliding and falling off during the movement.
It means that the object grasping issue is not the considered topic in this work.
Slight movements $\mathbf{r}$ of the end-effector can cause changes in the shape $\bar{\mathbf{c}}$ of the object.
This complex nonlinear relationship is given as:
\begin{align}
\label{eq32}
\bar{\mathbf{c}} 
= \mathbf{f}_c(\mathbf{r}) 
= \mathbf{f}_c(\mathbf{f}_r(\mathbf{q}))
\end{align}

Note that the dimension $2N$ of the observed centerline $\bar{\mathbf{c}}$ is generally large, thus it is inefficient to directly use it in a shape controller as it contains redundant information.
In this work, we just give this generation concept, i.e., shape feature extraction.
This module aims to extract efficient features $\mathbf{s}$ from the original data space and then map them into the low-dimensional feature space.
The relation between the robot's angles and such shape descriptor is modeled as follows:
\begin{align}
\label{eq33}
\mathbf{s} 
= \mathbf{f}_s(\bar{\mathbf{c}}) 
= \mathbf{f}_s(\mathbf{f}_c(\mathbf{f}_r(\mathbf{q})))
\end{align}

Taking the derivative of \eqref{eq33} with respect to $\mathbf{q}$, we obtain the first-order dynamic model:
\begin{equation}
\label{eq52}
\dot{\mathbf{s}} = \frac{{\partial \mathbf{f}_s}}{{\partial {\mathbf{q}}}}{{\dot{\mathbf{q}}}}
= {\mathbf{J}_s}\left( \mathbf{r} \right)\dot{\mathbf{q}}
\end{equation}
where ${\mathbf{J}_s}\left( \mathbf{r} \right) \in {\mathbb{R}^{p \times 6}}$ is named after the deformation Jacobian matrix (DJM), which relates the velocity of the joint angles with the shape feature changes.
As the physical knowledge of the flexible objects is hard to obtain in the practical environment, thus the traditional analytical-based methods are not applicable for the calculation of DJM here.
To this end, the approximation methods are used to estimate DJM online for the real-time environment.
The quasi-static \eqref{eq52} holds when the materials properties of objects do not change significantly during manipulations, as $\mathbf{J}_s(\mathbf{r})$ represents the velocity mapping between the objects and the robot motions.

\begin{remark}
\label{remark1}
The nonlinear kinematic relationship between ${\mathbf{s}}$ and $\mathbf{q}$ can seen as the special form of the traditional standard robot kinematic Jacobian mapping, which captures the perspective geometry of the object's points in the boundary.
\end{remark}

\begin{remark}
\label{reamkr2}
Note that the deformations of the elastic objects (not considering rheological objects) are only related to its own potential energy, the contact force with the manipulator and without the manipulation sequence of the manipulator.
These conditions guarantee the establishment of \eqref{eq33}.
\end{remark}

\subsection{Input-Saturation Model}
The system is without the effect of the input saturation, i.e., do not consider the maximum velocity of the robot in the applications.
Although the above assumptions can simplify the design process of the system and reduce the algorithm complexity of the system, this affects the stability and deformation accuracy of the system.
The existence of DJM estimation error could lead to control failure.

Input saturation as the most critical non-smooth nonlinearity should be explicitly considered in the control design.
Thus we propose the sliding mode deformation control for the uncertain nonlinear system with estimation error case and unknown non-symmetric input saturation in this section.
The control input is $\mathbf{u}=[u_1,\ldots,u_q]^\T \in \mathbb{R}^q$.

For simplicity, we define that $\mathbf{u}=\dot{\mathbf{q}}$ in the following sections.
In the past articles, most of them adopted a hard-saturation manner, i.e., the influence of input saturation is not considered in theoretical analysis.
However, in the real environment, the speed of the robot is bounded.
Therefore, we need to consider the speed limit of the manipulator in the actual manipulation, to improve the stability of the system, which is very important for the manipulation of the deformable object.
The input saturation is defined as follows:
\begin{equation}
\label{eq21}
{u_i} = \left\{ {\begin{array}{*{20}{c}}
	{{u}_i^{\max }}&{{v_i} \ge u_i^{\max }}\\
	{{v_i}}&{u_i^{\min } < {v_i} < u_i^{\max }}\\
	{u_i^{\min }}&{{v_i} \le u_i^{\min }}
	\end{array}} \right.
	, for \ i=1,\ldots,6
\end{equation}
where $\mathbf{u} = [u_1,\ldots,u_6]^\T \in \mathbb{R}^6$ is the system input and the saturation output, and $\mathbf{v} = [v_1,\ldots,v_6]^\T \in \mathbb{R}^6$ is the actually designed control input.
${u}_i^{max}, {u}_i^{min}$ are the known joint angular velocity limits.

\subsection{Mathematical Properties}
Before furthering our control design, some useful properties are given here.
\begin{lemma}
\label{lemma1}
\cite{polycarpou1993robust}
The inequality $0 \le \left| x \right| - x\tanh \left({{x}/{\varepsilon}} \right) \le \delta \varepsilon$ holds for any $\varepsilon > 0$ and for any $x \in \mathbb{R}$, where $\delta=0.2785$ is a constant that satisfies $\delta=e^{-(\delta+1)}$.
\end{lemma}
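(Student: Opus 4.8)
The plan is to reduce the two-sided inequality to a single-variable optimization and then identify the maximizing value with the transcendental constant $\delta$. First I would note that $g(x) := |x| - x\tanh(x/\varepsilon)$ is even in $x$, since $\tanh$ is odd and hence $x\tanh(x/\varepsilon)$ is even; it therefore suffices to treat $x \ge 0$, where $g(x) = x\bigl(1 - \tanh(x/\varepsilon)\bigr)$. The lower bound is then immediate: for $x \ge 0$ both factors are nonnegative (because $\tanh$ maps into $(-1,1)$, so $1 - \tanh(x/\varepsilon) > 0$), giving $g(x) \ge 0$, with equality only at $x=0$.

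For the upper bound I would rescale by setting $t = x/\varepsilon \ge 0$, so that $g(x) = \varepsilon\,h(t)$ with $h(t) := t\bigl(1-\tanh t\bigr)$. Since $h(0)=0$, $h(t)>0$ for $t>0$, and $h(t)\to 0$ as $t\to\infty$ (as $1-\tanh t$ decays like $2e^{-2t}$), the function $h$ attains a global maximum at an interior point. Differentiating and using $\operatorname{sech}^2 t = (1-\tanh t)(1+\tanh t)$ gives the factorization
\[
h'(t) = (1-\tanh t)\,\bigl[\,1 - t\,(1+\tanh t)\,\bigr].
\]
The first factor is strictly positive, while $t\,(1+\tanh t)$ increases monotonically from $0$ to $\infty$ on $[0,\infty)$; hence the bracket vanishes at a unique $t^*$ satisfying $t^*(1+\tanh t^*)=1$, and the sign of $h'$ changes from positive to negative there, so $t^*$ is the sole maximizer.

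The crux is to show that $h(t^*)$ equals the stated constant. Using $1+\tanh t = 2/(1+e^{-2t})$ and $1-\tanh t = 2e^{-2t}/(1+e^{-2t})$, the critical condition $t^*(1+\tanh t^*)=1$ becomes $2t^* = 1 + e^{-2t^*}$, i.e.\ $e^{-2t^*} = 2t^*-1$. Substituting the same identities into $h$ gives $h(t^*) = t^*(1-\tanh t^*) = e^{-2t^*} = 2t^*-1$. Setting $\delta := h(t^*)$ and thus $2t^* = \delta+1$ collapses everything into the fixed-point relation $\delta = e^{-(\delta+1)}$, whose numerical solution is $\delta \approx 0.2785$. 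Therefore $g(x) = \varepsilon\,h(t) \le \varepsilon\,h(t^*) = \delta\varepsilon$ for all $x$, which closes the upper bound. I expect the main obstacle to be exactly this last algebraic step: recognizing that the exponential form of $\tanh$ simultaneously simplifies the critical-point equation and the maximum value into the single relation $\delta = e^{-(\delta+1)}$, rather than the routine calculus of locating and classifying the critical point.
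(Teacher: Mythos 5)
Your proof is correct and complete: the reduction to the even, rescaled function $h(t)=t(1-\tanh t)$, the factorization of $h'$, and the algebraic collapse of the critical-point condition $2t^*=1+e^{-2t^*}$ into the fixed-point relation $\delta=e^{-(\delta+1)}$ with $\delta=h(t^*)=e^{-2t^*}=2t^*-1$ all check out, and the numerical value $\delta\approx 0.2785$ is consistent. The paper itself offers no proof of this lemma --- it is stated with only a citation to Polycarpou's work --- so there is no in-paper argument to compare against; your derivation is the standard one from that source and supplies exactly what the paper omits.
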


\begin{definition}
\label{definition1}
\cite{ma2014adaptive}
Gauss error function $\erf(x)$ is a nonelementary function of sigmoid shape, which is defined as:
\begin{align}
    \label{eq2}
    \erf(x)=\frac{2}{\sqrt{\pi}} \int_{0}^{x} {e^{-t^2}dt}
\end{align}
where $\erf(x)$ is a real-valued and continuous differentiable function, it has no singularities (except that at infinity) and its Taylor expansion always converges.
\end{definition}

\begin{assumption}
\label{assumption1}
DJM is composed of estimated value $\hat{\mathbf{J}}_s(t)$ and approximation error $\tilde{\mathbf{J}}_s(\mathbf{r},t)$, i.e., $\hat{\mathbf{J}}_s(\mathbf{r}) = \hat{\mathbf{J}}_s(t) + \tilde{\mathbf{J}}_s(\mathbf{r},t)$.
\end{assumption}

\begin{assumption}
\label{assumption2}
The disturbance $\mathbf{d} = \hat{\mathbf{J}}_s \tilde{\mathbf{u}} + \tilde{\mathbf{J}}_s \mathbf{u}$ has the unknown positive constant limit, $\| \mathbf{d}\| \le \eta_1$.
\end{assumption}

\begin{assumption}
\label{assumption3}
The disturbance $\dot{\hat{\mathbf{J}}} \tilde{\mathbf{u}}$ has the unknown positive constant limit, $\| {\dot{\hat{\mathbf{J}}}_s \tilde{\mathbf{u}}} \| \le \eta_2$.
\end{assumption}

\begin{assumption}
\label{assumption4}
As the slow deformation speed is considered in this paper, thus it is assumed that $\dot{\tilde{\mathbf{u}}} = \mathbf{0}_p$.
\end{assumption}

\begin{figure}[ht]
\centering
\includegraphics[scale=0.11]{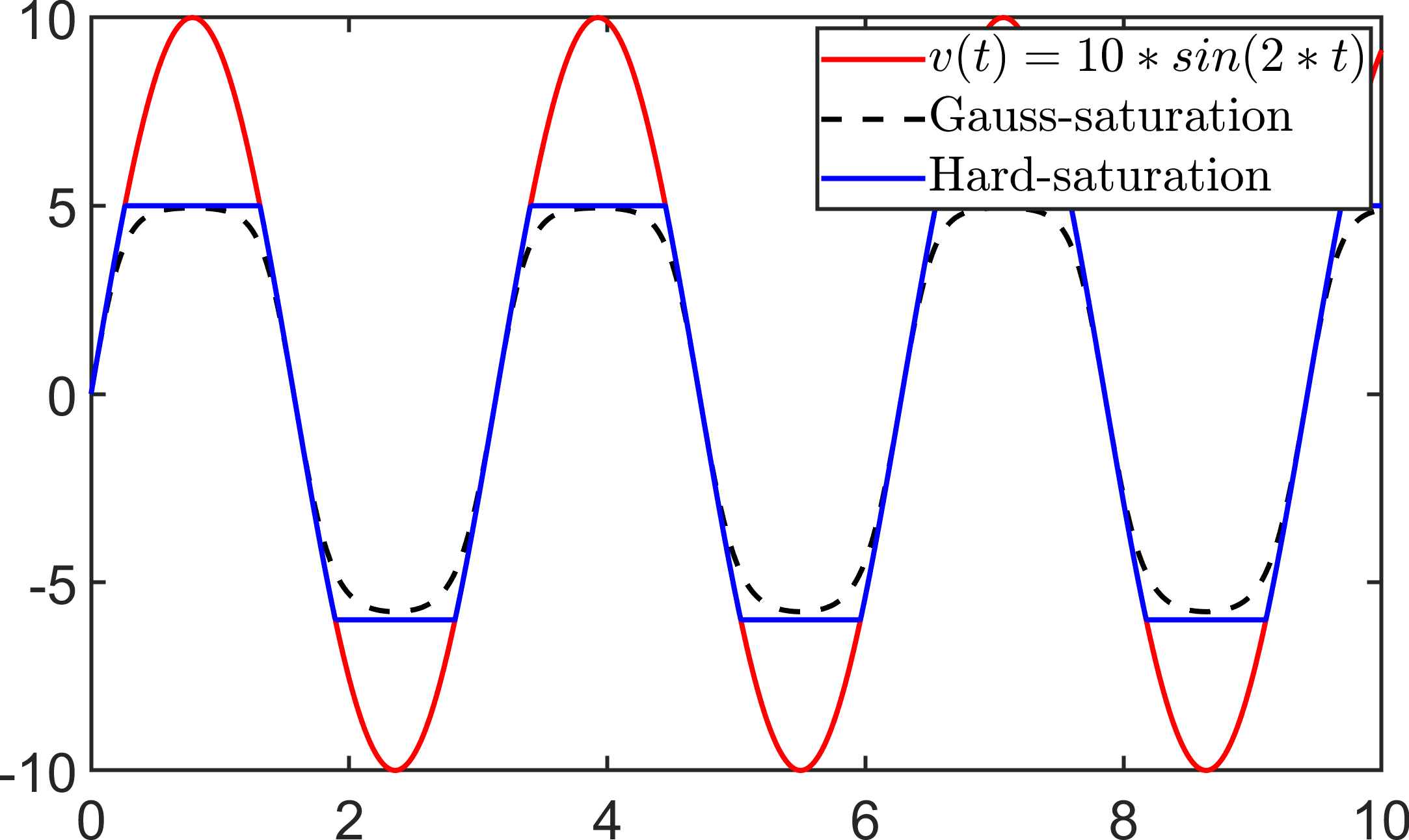}
\caption{Saturation functions.}
\label{fig1}
\end{figure}

\section{Controller Design}
Traditional input-saturation model \eqref{eq21} is a discontinuous function.
If this saturation model is simply adopted, it will affect the stability of the system, and it is easy to cause damage to the actuator in practical applications.
In this article, we utilize the model in \cite{ma2014adaptive} to describe the saturation nonlinearity.
Refer to Definition \ref{definition1}, the asymmetric input-saturation model can be transformed into the smooth format given as:
\begin{align}
\label{eq23}
u_i(v_i) 
&= {u}_{mi} \times \erf(\frac{\sqrt{\pi}}{2{u}_{mi}}{v}_i), \ \ for\ \ i=1,\ldots,6 , \\
{u}_{mi}
&=(u_i^{\max} + u_i^{\min})/2 + (u_i^{\max} - u_i^{\max})/2 * \sgn(v_i) \notag
\end{align}
where $\sgn(\cdot)$is the sign function.
Fig. \ref{fig1} shows the conceptual Gauss-saturation model in the following case:
\begin{equation}
v(t)=10\sin(2t), \ \ \
u^{\max}=5, \ \ \
u^{\min}=-6
\end{equation}
where $v(t)$ is the original control input without any saturation processing.
The saturation error function is constructed as follows:
\begin{align}
\label{eq24}
\tilde{\mathbf{u}} = \mathbf{u} - \mathbf{v} 
\end{align}
for $\mathbf{u}$ and $\mathbf{v}$ are the functions of time $t$.
Considering Assumption \ref{assumption1} and input-saturation model \eqref{eq23}, the differential equation \eqref{eq33}, the following anti-saturation model is obtained as follows:
\begin{equation}
\label{eq22}
\dot{\mathbf{s}}  
= \mathbf{J}_s\mathbf{u} 
= ( {\hat{\mathbf{J}}_s + \tilde{\mathbf{J}}_s} ) ( {\mathbf{v} + \tilde{\mathbf{u}}} )
= \hat{\mathbf{J}}_s \mathbf{v} + \mathbf{d}
\end{equation}
where $\mathbf{d} = \hat{\mathbf{J}}_s \tilde{\mathbf{u}} + \tilde{\mathbf{J}}_s \mathbf{u}$ is the total disturbance including approximation error $\tilde{\mathbf{J}}$ and saturation error $\tilde{\mathbf{u}}$.

Define the deformation error:
\begin{equation}
\label{eq3}
\begin{array}{*{20}{c}}
{{\mathbf{e}_1} = \mathbf{s} - {\mathbf{s}_d},} &
{{\mathbf{e}_2} = \dot{\mathbf{s}} - \hat{\mathbf{J}}_s \mathbf{u}}
\end{array}
\end{equation}
for $\mathbf{s}_d$ as the desired shape feature.
The derivatives of \eqref{eq3} with respect to time-variable $t$ is:
\begin{align}
\label{eq6}
{{\dot{\mathbf{e}} }_1} &= \hat{\mathbf{J}}_s \mathbf{v} + \mathbf{d} - {{\dot{\mathbf{s}} }_d} \notag \\
{{\dot{\mathbf{e}} }_2} &= \ddot{\mathbf{s}} 
- \dot{\hat{\mathbf{J}}}_s  \mathbf{v} 
- \hat{\mathbf{J}}_s \dot{\mathbf{v}} 
- \dot{\hat{\mathbf{J}}}_s \tilde{\mathbf{u}} 
- \hat{\mathbf{J}}_s \dot{\tilde{\mathbf{u}}}
\end{align}

Considering Assumption \ref{assumption4}, $\dot{\mathbf{e}}_2$ is transformed into:
\begin{equation}
\label{eq7}
{{\dot{\mathbf{e}} }_2} 
= \ddot{\mathbf{s}} - \dot{\hat{\mathbf{J}}}_s  \mathbf{v} - \hat{\mathbf{J}}_s \dot{\mathbf{v}} - \dot{\hat{\mathbf{J}}}_s \tilde{\mathbf{u}}
\end{equation}

The integral sliding surface is constructed as follows \cite{shen2014integral}:
\begin{align}
    {\boldsymbol{\sigma} _1} &= {\mathbf{e}_1} - {\mathbf{e}_1}\left( 0 \right) + \int_0^t {{\mathbf{e}_1}\left( \tau  \right)d\tau } \notag \\
    {\boldsymbol{\sigma} _2} &= {\mathbf{e}_2} - {\mathbf{e}_2}\left( 0 \right) + \int_0^t {{\mathbf{e}_2}\left( \tau  \right)d\tau }
\end{align}

Combing with \eqref{eq6} and \eqref{eq7}, the time derivative of $\boldsymbol{\sigma}_1$ is:
\begin{align}
\label{eq8}
\dot{\boldsymbol{\sigma}}_1 
&= {\hat{\mathbf{J}}_s \mathbf{v} + \mathbf{d} - {{\dot{\mathbf{s}}}_d} + \mathbf{e}_1} \notag \\
\dot{\boldsymbol{\sigma}}_2 
&= \ddot{\mathbf{s}} - \dot{\hat{\mathbf{J}}}_s  \mathbf{v} - \hat{\mathbf{J}}_s \dot{\mathbf{v}} - \dot{\hat{\mathbf{J}}}_s \tilde{\mathbf{u}} + {\mathbf{e} _2}
\end{align}
and design the velocity control input as follows:
\begin{align}
\label{eq9}
\mathbf{v}
&= {{\hat{\mathbf{J}}}_s^ + }\left( { - {\boldsymbol{\sigma} _1} + {{\dot{\mathbf{s}}}_d} - {{{\mathbf{e}}}_1} + {\boldsymbol{\Theta} _1}} \right) \notag \\
{\boldsymbol{\Theta} _1}
&= - \boldsymbol{\sigma} _1^{\T + }\tanh \left( {{{\left\| {{\boldsymbol{\sigma} _1}} \right\|}}/{{{\varepsilon _1}}}} \right){{\hat{\eta} }_1}\left\| {{\boldsymbol{\sigma} _1}} \right\|
\end{align}
where $\hat{\mathbf{J}}_s^+$ denotes the pseudo-inverse of the matrix $\hat{\mathbf{J}}_s$.
Since there is no power term and sign function, $\mathbf{v}$ is continuous without chattering.
And $\hat{\eta}_1$ is updated as:
\begin{align}
\label{eq35}
{{\dot{\hat \eta} }_1}
= \tanh \left( {{{\left\| {{\boldsymbol{\sigma} _1}} \right\|}}/{{{\varepsilon _1}}}} \right)\left\| {{\boldsymbol{\sigma} _1}} \right\| - {\gamma _1}{{\hat \eta}_1} 
\end{align}

To quantify the shape deformation error, we introduce the quadratic function $V_1(\boldsymbol{\sigma}_1) = \frac{1}{2} \boldsymbol{\sigma} _1^T \boldsymbol{\sigma} _1$ whose time-derivative satisfies:
\begin{align}
\label{eq10}
{{\dot V}_1}
&=\boldsymbol{\sigma}_1 ({\hat{\mathbf{J}}_s \mathbf{v} + \mathbf{d} - {{\dot{\mathbf{s}}}_d} + \mathbf{e}_1}) \notag \\
&= - {\left\| {{\boldsymbol{\sigma} _1}} \right\|^2} 
- \tanh ( {{{\left\| {{\boldsymbol{\sigma} _1}} \right\|}}/{{{\varepsilon _1}}}}){{\hat \eta}_1}\left\| {{\boldsymbol{\sigma} _1}} \right\| + \boldsymbol{\sigma} _1^\T \mathbf{d}
\end{align}

From Assumption \ref{assumption2} and considering norm induction formula, we can obtain the following relation:
\begin{equation}
\label{eq11}
\boldsymbol{\sigma} _1^\T \mathbf{d} \le \left\| {{\boldsymbol{\sigma} _1}} \right\|\left\| \mathbf{d} \right\| \le {\eta _1}\left\| {{\boldsymbol{\sigma} _1}} \right\|
\end{equation}

Substitution of \eqref{eq11} into \eqref{eq10} yields:
\begin{align}
\label{eq12}
{{\dot V}_1} \le
- {\left\| {{\boldsymbol{\sigma} _1}} \right\|^2} 
- \tanh ( {{{\left\| {{\boldsymbol{\sigma} _1}} \right\|}}/{{{\varepsilon _1}}}}){{\hat \eta}_1}\left\| {{\boldsymbol{\sigma} _1}} \right\| 
+ {\eta _1}\left\| {{\boldsymbol{\sigma} _1}} \right\|
\end{align}

Following the similar manner, the adaptive update rule of DJM is computed as:
\begin{align}
\label{eq13}
\dot{\hat{\mathbf{J}}}_s
&= ( {\ddot{\mathbf{s}}
- \hat{\mathbf{J}}_s \dot{\mathbf{v}}
+ {\boldsymbol{\sigma} _2} + {\mathbf{e}_2}
+ \boldsymbol{\sigma} _2^{T + }{\boldsymbol{\Theta} _2}} 
){\mathbf{v}^+} \notag \\
{\boldsymbol{\Theta} _2} 
&= \tanh \left( {{{\left\| {{\boldsymbol{\sigma} _2}} \right\|}}/{{{\varepsilon _2}}}} \right){{\hat{\eta} }_2}\left\| {{\boldsymbol{\sigma} _2}} \right\|
\end{align}
where the adaptive update rule of $\hat{\eta}_2$ is designed as follows:
\begin{align}
\label{eq14}
{{\dot{\hat \eta} }_2} 
= \tanh \left( {{{\left\| {{\sigma _2}} \right\|}}/{{{\varepsilon _2}}}} \right)\left\| {{\sigma _2}} \right\| - {\gamma _2}{{\hat \eta}_2}
\end{align}
for $\varepsilon_1, \varepsilon_2, \gamma_1, \gamma_2$ as positive constants.
$\boldsymbol{\Theta}_2$ is used to compensate for the effect of saturation error $\tilde{\mathbf{u}}$ on the estimated value of $\hat{\mathbf{J}}_s$.
The quadratic function $V_2(\boldsymbol{\sigma}_2) = \frac{1}{2} \boldsymbol{\sigma} _2^T \boldsymbol{\sigma} _2$ whose time-derivative satisfies:
\begin{align}
    \label{eq37}
    \dot{V}_2 
    &= \boldsymbol{\sigma}_2 (\ddot{\mathbf{s}} - \dot{\hat{\mathbf{J}}}_s  \mathbf{v} - \hat{\mathbf{J}}_s \dot{\mathbf{v}} - \dot{\hat{\mathbf{J}}}_s \tilde{\mathbf{u}} + {\mathbf{e} _2}) \notag \\
    &= - {\left\| {{\boldsymbol{\sigma} _2}} \right\|^2} - \tanh \left( {\left\| {{\boldsymbol{\sigma} _2}} \right\|/{\varepsilon _2}} \right){{\hat \eta }_2}\left\| {{\boldsymbol{\sigma} _2}} \right\| - \boldsymbol{\sigma} _2^\T{{\dot{\hat{\mathbf{J}} } }_s}\tilde{\mathbf{u}} 
\end{align}

From Assumption \ref{assumption3} and considering norm induction formula, we can obtain the following relation:
\begin{align}
    \label{eq38}
    \boldsymbol{\sigma} _2^\T{{\dot{\hat{\mathbf{J}}} }_s}\tilde{\mathbf{u}} \le {\eta _2}\left\| {{\boldsymbol{\sigma} _2}} \right\|
\end{align}

Substitution of \eqref{eq38} into \eqref{eq37} yields:
\begin{align}
    \label{eq39}
    \dot{V}_2 \le
    - {\left\| {{\boldsymbol{\sigma} _2}} \right\|^2} - \tanh \left( {\left\| {{\boldsymbol{\sigma} _2}} \right\|/{\varepsilon _2}} \right){{\hat \eta }_2}\left\| {{\boldsymbol{\sigma} _2}} \right\| + {\eta _2}\left\| {{\boldsymbol{\sigma} _2}} \right\|
\end{align}

\begin{proposition}
Consider the closed-loop shape servoing system \eqref{eq22} considered with Assumption \ref{assumption1} - \ref{assumption4}, the input-saturation issue \eqref{eq13}, the velocity controller \eqref{eq9}, the DJM estimation law \eqref{eq13}, with adaptation laws \eqref{eq35} \eqref{eq14}.
For a given desired feature vector $\mathbf{s}_d$, there exists an appropriate set of control parameters that ensure that:
\begin{enumerate}
\item 
All signals in the close-loop system remain uniformly ultimately bounded (UUB);

\item 
The deformation error $\mathbf{e}_1$ asymptotically converges to a compact set around zero.
\end{enumerate}
\end{proposition}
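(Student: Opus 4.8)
The plan is to fuse the two error subsystems together with their adaptation errors into a single composite Lyapunov function and to show it obeys a first-order linear differential inequality $\dot{V} \le -\kappa V + C$. First I would introduce the adaptation errors $\tilde{\eta}_i = \hat{\eta}_i - \eta_i$ for $i = 1,2$, where $\eta_1,\eta_2$ are the unknown but constant bounds of Assumptions \ref{assumption2} and \ref{assumption3}, and take the augmented candidate $V = V_1 + V_2 + \tfrac{1}{2}\tilde{\eta}_1^2 + \tfrac{1}{2}\tilde{\eta}_2^2$. Since $\eta_i$ is constant, $\dot{\tilde{\eta}}_i = \dot{\hat{\eta}}_i$, so $\dot{V}$ decomposes into an index-$1$ and an index-$2$ part that are handled identically.

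The central step is to remove the sign-indefinite term $\eta_i\|\boldsymbol{\sigma}_i\|$ appearing in \eqref{eq12} and \eqref{eq39} by invoking Lemma \ref{lemma1} with $x = \|\boldsymbol{\sigma}_i\| \ge 0$ and $\varepsilon = \varepsilon_i$, which yields
\[
\eta_i\|\boldsymbol{\sigma}_i\| \le \eta_i \tanh\!\left(\|\boldsymbol{\sigma}_i\|/\varepsilon_i\right)\|\boldsymbol{\sigma}_i\| + \eta_i\delta\varepsilon_i .
\]
Substituting this into the bound on $\dot{V}_i$ collapses the two $\tanh$ terms into $-\tilde{\eta}_i\tanh(\|\boldsymbol{\sigma}_i\|/\varepsilon_i)\|\boldsymbol{\sigma}_i\|$ plus the residual $\eta_i\delta\varepsilon_i$. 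Adding the adaptation contribution $\tilde{\eta}_i\dot{\hat{\eta}}_i$ from \eqref{eq35} and \eqref{eq14}, the term $\tilde{\eta}_i\tanh(\|\boldsymbol{\sigma}_i\|/\varepsilon_i)\|\boldsymbol{\sigma}_i\|$ cancels exactly, leaving $-\gamma_i\tilde{\eta}_i\hat{\eta}_i$. A Young-type inequality $-\gamma_i\tilde{\eta}_i\hat{\eta}_i \le -\tfrac{\gamma_i}{2}\tilde{\eta}_i^2 + \tfrac{\gamma_i}{2}\eta_i^2$ then converts this into a negative definite quadratic in $\tilde{\eta}_i$ plus a constant. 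Summing both indices gives
\[
\dot{V} \le -\|\boldsymbol{\sigma}_1\|^2 - \|\boldsymbol{\sigma}_2\|^2 - \tfrac{\gamma_1}{2}\tilde{\eta}_1^2 - \tfrac{\gamma_2}{2}\tilde{\eta}_2^2 + C ,
\]
with $C = \eta_1\delta\varepsilon_1 + \eta_2\delta\varepsilon_2 + \tfrac{\gamma_1}{2}\eta_1^2 + \tfrac{\gamma_2}{2}\eta_2^2$, hence $\dot{V} \le -\kappa V + C$ with $\kappa = \min\{2,\gamma_1,\gamma_2\}$.

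For claim (1), the comparison lemma applied to $\dot{V} \le -\kappa V + C$ gives $V(t) \le V(0)e^{-\kappa t} + (C/\kappa)(1-e^{-\kappa t})$, which is bounded for all $t$; since $V$ dominates each of $\boldsymbol{\sigma}_1,\boldsymbol{\sigma}_2,\tilde{\eta}_1,\tilde{\eta}_2$, all of these (and therefore $\hat{\eta}_1,\hat{\eta}_2$) are UUB. For claim (2), I would differentiate the integral sliding surface to obtain $\dot{\boldsymbol{\sigma}}_1 = \dot{\mathbf{e}}_1 + \mathbf{e}_1$, so that $\mathbf{e}_1$ is the state of the Hurwitz filter $\dot{\mathbf{e}}_1 = -\mathbf{e}_1 + \dot{\boldsymbol{\sigma}}_1$; because the ultimate bound forces $\limsup_{t\to\infty}\tfrac{1}{2}\|\boldsymbol{\sigma}_1\|^2 \le C/\kappa$ and the closed loop makes $\dot{\boldsymbol{\sigma}}_1 = -\boldsymbol{\sigma}_1 + \boldsymbol{\Theta}_1 + \mathbf{d}$ bounded, $\mathbf{e}_1$ converges to a compact set around zero whose radius can be shrunk by increasing $\gamma_i$ and decreasing $\varepsilon_i$.

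I expect claim (2) to be the main obstacle: boundedness of $\boldsymbol{\sigma}_1$ alone does not give convergence of $\mathbf{e}_1$, and the payoff relies on the specific integral structure of the sliding surface, which recasts $\mathbf{e}_1$ as the output of a stable filter. Care is needed to verify that $\dot{\boldsymbol{\sigma}}_1$ (not merely $\boldsymbol{\sigma}_1$) stays bounded after substituting the controller \eqref{eq9}, and to quantify how the ultimate bound on $\|\boldsymbol{\sigma}_1\|$ propagates through the filter into the final compact set for $\mathbf{e}_1$.
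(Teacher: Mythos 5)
Your proof follows essentially the same route as the paper: the same composite Lyapunov function $V = V_1 + V_2 + \tfrac{1}{2}\tilde{\eta}_1^2 + \tfrac{1}{2}\tilde{\eta}_2^2$, the same cancellation of the $\tanh$ terms via the adaptation laws \eqref{eq35} and \eqref{eq14}, the same use of Lemma \ref{lemma1} and Young's inequality, and the same final inequality $\dot V \le -\min\{2,\gamma_1,\gamma_2\}\,V + C$ with the identical constant $C$. Your closing filter argument for claim (2) — writing $\dot{\mathbf{e}}_1 = -\mathbf{e}_1 + \dot{\boldsymbol{\sigma}}_1$ and checking that $\dot{\boldsymbol{\sigma}}_1$ stays bounded — is a step the paper omits entirely (it asserts convergence of $\mathbf{e}_1$ directly from the bound on $V$), so on that point your version is actually more complete than the original.
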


\begin{proof}
Consider the energy-like function
\begin{equation}
\label{eq15}
V(\boldsymbol{\sigma}_1, \boldsymbol{\sigma}_2, \tilde{\eta}_1, \tilde{\eta}_2) 
= V_1(\boldsymbol{\sigma}_1) + V_2(\boldsymbol{\sigma}_2)
+ \frac{\tilde{\eta} _1^2}{2}
+ \frac{\tilde{\eta} _2^2}{2}
\end{equation}
for 
$\tilde{\eta}_1=\eta_1-\hat{\eta}_1,\tilde{\eta}_2 = \eta_2 - \hat{\eta}_2$ are the approximation errors of the constant $\eta_1$ and $\eta_2$, respectively.
Thus, the time-derivative of \eqref{eq15} is computed as:
\begin{align}
    \label{eq40}
    \dot V = {{\dot V}_1}\left( {{\boldsymbol{\sigma} _1}} \right) + {{\dot V}_2}\left( {{\boldsymbol{\sigma} _2}} \right) 
    - {{\tilde \eta }_1}{{\dot{\hat{\eta}}}_1}
    - {{\tilde \eta }_2}{{\dot{\hat{\eta}}}_2}
\end{align}

Invoking \eqref{eq12} \eqref{eq39}, we can show that the time derivative of $V_2$ satisfies:
\begin{align}
    \label{eq41}
    \dot V 
    &\le - {\left\| {{\sigma _1}} \right\|^2} - {\left\| {{\sigma _2}} \right\|^2} - {{\tilde \eta }_1}{{\dot{\hat{\eta} }  }_1} - {{\tilde \eta }_2}{{\dot{\hat{\eta} }  }_2} \notag \\
    &- \tanh \left( {\left\| {{\sigma _1}} \right\|/{\varepsilon _1}} \right){{\hat \eta }_1}\left\| {{\sigma _1}} \right\| + {\eta _1}\left\| {{\sigma _1}} \right\| \notag \\
    &- \tanh \left( {\left\| {{\sigma _2}} \right\|/{\varepsilon _2}} \right){{\hat \eta }_2}\left\| {{\sigma _2}} \right\| + {\eta _2}\left\| {{\sigma _2}} \right\|
\end{align}

Thus, we can obtain:
\begin{align}
    \label{eq42}
    \dot V 
    &\le  - {\left\| {{\sigma _1}} \right\|^2} - {\left\| {{\sigma _2}} \right\|^2} \notag \\
    &- {{\tilde \eta }_1}( {{{\dot{\hat{\eta} }  }_1} - \tanh \left( {\left\| {{\sigma _1}} \right\|/{\varepsilon _1}} \right)\left\| {{\sigma _1}} \right\|} ) \notag \\
    &- {{\tilde \eta }_2}( {{{\dot{\hat{\eta} }  }_2} - \tanh \left( {\left\| {{\sigma _2}} \right\|/{\varepsilon _2}} \right)\left\| {{\sigma _1}} \right\|} ) \notag \\
    &+ {\eta _1}\left( {\left\| {{\sigma _1}} \right\| - \left\| {{\sigma _1}} \right\|\tanh \left( {\left\| {{\sigma _1}} \right\|/{\varepsilon _1}} \right)} \right) \notag \\
    &+ {\eta _2}\left( {\left\| {{\sigma _2}} \right\| - \left\| {{\sigma _2}} \right\|\tanh \left( {\left\| {{\sigma _2}} \right\|/{\varepsilon _2}} \right)} \right)
\end{align}

Refer to Lemma \ref{lemma1}, it yields
\begin{align}
\label{eq17}
{{\dot V}} 
&\le - {\left\| {{\sigma _1}} \right\|^2} - {\left\| {{\sigma _2}} \right\|^2} 
 + {\eta _1}\delta {\varepsilon _1} + {\eta _2}\delta {\varepsilon _2} \notag \\
&+ {{\tilde{\eta} }_1}(  \left\| {{\sigma _1}} \right\| {\tanh \left( {{{\left\| {{\sigma _1}} \right\|}}/{{{\varepsilon _1}}}} \right) 
	- {{\dot{\hat{\eta} } }_1}} ) \notag \\
&+ {{\tilde{\eta} }_2}( {\left\| {{\sigma _2}} \right\|\tanh \left( {{{\left\| {{\sigma _2}} \right\|}}/{{{\varepsilon _2}}}} \right) 
	- {{\dot{\hat{\eta} } }_2}} ) 
\end{align}

By the well-known Young's inequality, it yields
\begin{equation}
\label{eq16}
\begin{array}{*{20}{c}}
{{{\tilde{\eta} }_1}{{\hat{\eta} }_1} \le \frac{1}{2}\eta_1^2 - \frac{1}{2}\tilde \eta_1^2},&{{{\tilde \eta}_2}{{\hat \eta}_2} \le \frac{1}{2}\eta_2^2 - \frac{1}{2}\tilde \eta_2^2}
\end{array}
\end{equation}

Considering the adaptive rules \eqref{eq35} \eqref{eq14}, it yields
\begin{align}
\label{eq18}
{{\dot V}} 
&\le  - {\left\| {{\sigma _1}} \right\|^2} - {\left\| {{\sigma _2}} \right\|^2} - \frac{{{\gamma _1}}}{2}\tilde \eta_1^2 - \frac{{{\gamma _2}}}{2}\tilde \eta_2^2 \notag \\
&+ \frac{{{\gamma _1}}}{2}\eta_1^2 + \frac{{{\gamma _2}}}{2}\eta_2^2 + {\eta_1}\delta {\varepsilon _1} + {\eta_2}\delta {\varepsilon _2} \notag \\
&\le  - aV + b 
\end{align}
where $a=\min(2,\gamma_1,\gamma_2)$, $b=\frac{{{\gamma _1}}}{2}\eta_1^2 + \frac{{{\gamma _2}}}{2}\eta_2^2 + \delta({\eta_1} {\varepsilon _1} + {\eta_2} {\varepsilon _2})$.
By selecting the appropriate $\gamma_1$ and $\gamma_2$ that ensure that $a>0$, the deformation error $\mathbf{e}_1$ asymptotically converge to a compact set around zeros, and ensures that the estimation error $\tilde{\mathbf{J}}_s$ remains bounded. 
\end{proof}

\appendices
\ifCLASSOPTIONcaptionsoff
  \newpage
\fi

\bibliography{biblio.bib}
\bibliographystyle{IEEEtran}
\end{document}